\newtheorem{theorem}{Theorem}
\ifcvprfinal\pagestyle{empty}\fi
\newif\ifdraft
  \newcommand{\ksenia}[1]{{\color{cyan}{#1}}}
  \newcommand{\christoph}[1]{{\color{blue}{C: #1}}}
  \newcommand{\jasper}[1]{{\color{magenta}{J: #1}}}  
  \newcommand{\vitto}[1]{{\color{red}{V: #1}}}
  \newcommand{\ksenia}[1]{}
  \newcommand{\christoph}[1]{}
  \newcommand{\jasper}[1]{}
  \newcommand{\vitto}[1]{}
\newcommand{\IAD}{{IAD}}
\newcommand{\approachClassification}{{IAD-Prob}}
\newcommand{\approachRL}{{IAD-RL}}
\newcommand\descitem[1]{\item{\bfseries #1}\\}
\newcommand{\mypartop}[1]{\vspace{0mm}\paragraph{#1}}
\newcommand{\mypar}[1]{\vspace{-4mm}\paragraph{#1}}
\begin{document}

\title{Learning Intelligent Dialogs for Bounding Box Annotation}

\author{Ksenia Konyushkova\\
\small{CVLab, EPFL} \\
{\tt\scriptsize ksenia.konyushkova@epfl.ch}
\thanks{This work was done during an internship at Google AI Perception}
\and
Jasper Uijlings\\
\small{Google AI Perception}\\
{\tt\scriptsize jrru@google.com}
\and
Christoph H. Lampert\\
\small{IST Austria}\\
{\tt\scriptsize chl@ist.ac.at}
\and
Vittorio Ferrari\\
\small{Google AI Perception}\\
{\tt\scriptsize vittoferrari@google.com}
}

\maketitle
\graphicspath{{images-files/}}

\begin{abstract}

We introduce Intelligent Annotation Dialogs for bounding box annotation.
We train an agent to automatically choose a sequence of actions for a human annotator to produce a bounding box
in a minimal amount of time.  Specifically, we consider two actions:
box verification~\cite{papadopoulos16cvpr}, where the annotator verifies a box generated by an object
detector, and manual box drawing.
We explore two kinds of agents, one based on predicting the probability that a box will be positively verified,
and the other based on reinforcement learning.
We demonstrate that
(1) our agents are able to learn efficient annotation strategies in several scenarios,
automatically adapting to the image difficulty, the desired quality of the boxes, and the detector strength;
(2) in all scenarios the resulting annotation dialogs speed up annotation compared to manual box drawing alone and box verification alone, while also outperforming any fixed combination of verification and drawing in most scenarios;
(3) in a realistic scenario where the detector is iteratively re-trained, our agents evolve a series of strategies that reflect the shifting trade-off between verification and drawing as the detector grows stronger.

\end{abstract}

\section{Introduction}
\label{sec:introduction}

Many recent advances in computer vision rely on supervised machine learning techniques that are known to crave for huge amounts of training data.
Object detection is no exception as state-of-the-art methods require a large number of images with annotated bounding boxes around objects.
However, drawing high quality bounding boxes is expensive: The official protocol used to annotate ILSVRC~\cite{russakovsky15ijcv} takes about 30 seconds per box~\cite{su12aaai}.
To reduce this cost, recent works explore cheaper forms of human supervision such as image-level labels~\cite{bilen16cvpr,kantorov16eccv,zhu17iccv}, box verification series~\cite{papadopoulos16cvpr}, point annotations~\cite{mettes16eccv,papadopoulos17cvpr}, and eye-tracking~\cite{papadopoulos14eccv}.

Among these forms, the recent work on box verification series~\cite{papadopoulos16cvpr} stands out as it demonstrated to deliver high quality detectors at low cost.
The scheme starts from a given weak detector, typically trained on image labels only, and uses it to localize objects in the images. For each image, the annotator is asked to verify whether the box produced by the algorithm covers an object tightly enough.
If not, the process iterates: the algorithm proposes another box and the annotator verifies it. 

\begin{figure}[t]
  \vspace{-0.2cm}
\includegraphics[width=0.46\linewidth]{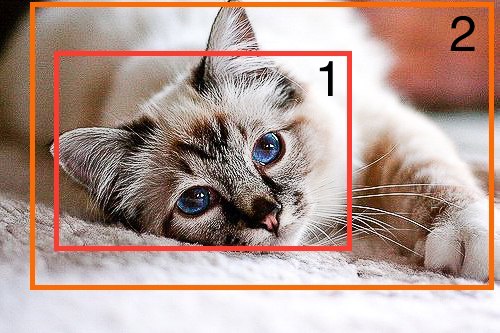}
\includegraphics[width=0.46\linewidth]{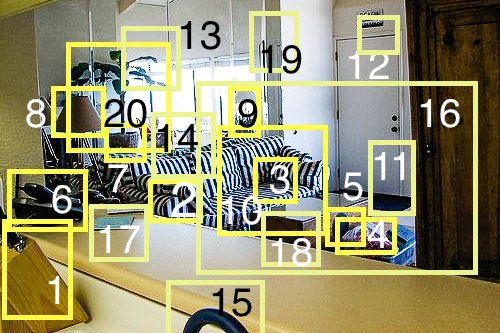}
\includegraphics[width=0.053\linewidth]{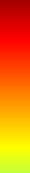}
\caption{Left: an image with a target class {\it cat}. The weak detector identified two box proposals with high scores. The best strategy in this case is to do a series of box verifications. Right: an image with a target class {\it potted plant}. The weak detector identified many box proposals with low scores. The best strategy is to draw a box.}
  \vspace{-0.2cm}
\label{fig:intuition}
\end{figure}

The success of box verification series depends on a variety of factors. 
For example, large objects on homogeneous backgrounds are likely to be found early in the series, and hence require little annotation time (Fig.~\ref{fig:intuition}, left).
However, small objects in crowded scenes might require many iterations, or could even not be found at all  (Fig.~\ref{fig:intuition}, right).
Furthermore, the stronger the detector is, the more likely it is to correctly localize new objects, and to do so early in the series.
Finally, the higher the desired box quality (i.e. how tight they should be), the lower the rate of positively verified boxes. This causes longer series, costing more annotation time. 
Therefore, in some situations manual box drawing~\cite{papadopoulos17iccv,su12aaai} is preferable. While more expensive than one verification, it always produces a box annotation.
When an annotation episode consists of many verifications, its duration can be longer than the time to draw a box, depending on the relative costs of the two actions.
Thus, different forms of annotation are more efficient in different situations.

In this paper we introduce Intelligent Annotation Dialogs (\IAD{}) for bounding box annotation.
Given an image, detector, and target class to be annotated, the aim of \IAD{} is to automatically choose the sequence of annotation actions that results in producing a bounding box in the least amount of time.
We train an \IAD{} agent to select the type of action based on previous experience in annotating images.
Our method automatically adapts to the difficulty of the image, the strength of the detector, the desired quality of the boxes, and other factors.
This is achieved by modeling the episode duration as a function of problem properties. We consider two alternative ways to do this, either 
\begin{enumerate*}[label={\alph*)}]
\item by predicting whether a proposed box will be positively or negatively verified (Sec.~\ref{sec:approach1}), or 
\item by directly predicting the episode duration (Sec.~\ref{sec:approach2}).
\end{enumerate*}

We evaluate \IAD{} by annotating bounding boxes in the PASCAL VOC 2007 dataset~\cite{everingham10ijcv} in several scenarios: \begin{enumerate*}[label={\alph*)}]
\item with various desired quality levels;
\item with detectors of varying strength; and
\item with two ways to draw bounding boxes, including a recent method which only takes 7s per box~\cite{papadopoulos17iccv}.
\end{enumerate*}
In all scenarios our experiments demonstrate that thanks to its adaptive behavior \IAD{} speeds up box annotation compared to manual box drawing alone, or box verification series alone. 
Moreover, it outperforms any fixed combination of them in most scenarios.
Finally, we demonstrate that \IAD{} learns useful strategies in a complex realistic scenario where the detector is continuously improved with the growing amount of the training data. 
Our IAD code is made publicly available\footnote{\ssmall{\url{https://github.com/google/intelligent_annotation_dialogs}}}.

\section{Related work }
\label{sec:related}

\mypartop{Drawing bounding boxes} 

Fully supervised object detectors are trained on data with manually annotated bounding boxes, which is costly. 
The reference box drawing interface~\cite{su12aaai} used to annotate ILSVRC~\cite{russakovsky15ijcv} requires 25.5s for drawing one box. 
Recently, a more efficient interface reduces costs to 7.0s without compromising on quality~\cite{papadopoulos17iccv}.
We consider both interfaces in this paper.

\mypar{Training object detectors from image-level labels}
Weakly Supervised Object Localization (WSOL)~\cite{bilen16cvpr,cinbis14cvpr,deselaers10eccv,haussmann17cvpr,kantorov16eccv,zhu17iccv} methods train object detectors from images labeled only as containing certain object classes, but without bounding boxes. This avoids the cost of box annotation, but leads to considerably weaker detectors than their fully supervised counterparts~\cite{bilen16cvpr,haussmann17cvpr,kantorov16eccv,zhu17iccv}. To produce better object detectors, extra human annotation is required.

\mypar{Other forms of weak supervision} 
Several works aim to reduce annotation cost of manual drawing by
using approximate forms of annotation. 
Eye-tracking is used for training object detectors~\cite{papadopoulos14eccv} and for action recognition in videos~\cite{mathe12eccv}.  
Point-clicks are used to derive bounding box annotations in images~\cite{papadopoulos17cvpr} and video~\cite{mettes16eccv}, and to train semantic segmentation models~\cite{bearman16eccv,bell15cvpr,wang14cviu}.
Other works train semantic segmentation models using scribbles~\cite{lin16cvpr,xu15cvpr}.

In this paper we build on box verification series~\cite{papadopoulos16cvpr}, where boxes are iteratively proposed by an object detector and verified by a human annotator. Experiments show that humans can perform box verification reliably (Fig.~\num{6} of~\cite{papadopoulos16cvpr}).
Besides, the Open Images dataset~\cite{openimages} contains \num{2.5} Million boxes annotated in this manner, demonstrating it can be done at scale.

\mypar{Interactive annotation}
Several works use human-machine collaboration to efficiently produce annotations. 
These works address interactive segmentation~\cite{boykov01iccv,Rother04-tdfixed,castrejon17cvpr,jain13iccv,jain16hcomp,nagaraja15iccv}, attribute-based fine-grained image classification~\cite{branson10eccv,parkash12eccv,biswas13cvpr,wah14cvpr}, and interactive video annotation~\cite{vondrick13ijcv}.
Branson \etal~\cite{branson14cvpr} transform different types of location information (e.g.~parts, bounding boxes, segmentations) into each other with corrections from an annotator.
These works follow a predefined annotation protocol, whereas we explore algorithms that can automatically select questions, adapting to the input image, the desired quality of the annotation, and other factors.

The closest work~\cite{russakovsky15cvpr} to ours proposes human-machine collaboration for bounding box annotation. Given a repertoire of questions, the problem is modeled with a Markov decision process. Our work differs in several respects. \begin{enumerate*}[label={(\arabic*)}] 

\item While Russakovsky \etal~\cite{russakovsky15cvpr} optimizes the expected precision of annotations over the whole dataset, our method delivers quality guarantees on each individual box.

\item Our approach of Sec~\ref{sec:approach1} is mediated by predicting the probability of a box to be accepted by an annotator. Based on this, we provide a provably optimal strategy which minimizes the expected annotation time.

\item Our reinforcement learning approach of Sec.~\ref{sec:approach2} learns a direct mapping from from measurable properties to annotation time, while avoiding any explicit modelling of the task.

\item Finally, we address a scenario where the detector is iteratively updated (Sec.~\ref{sec:experiment2}), as opposed to keeping it fixed.
\end{enumerate*}

\mypar{Active learning (AL)}
In active learning the goal is to train a model while asking human annotations for unlabeled examples which are expected to improve the model accuracy the most. 
It is used in computer vision to train whole-image
classifiers~\cite{joshi09cvpr,kovashka11iccv}, object class detectors~\cite{vijayanarasimhan14ijcv,yao2012cvpr}, and semantic segmentation~\cite{siddiquie10cvpr,vijayanarasimhan08nips,vijayanarasimhan09cvpr}.
While the goal of AL is to select a subset of the data to be annotated, this paper aims at minimizing the time to annotate each of the examples.

\mypar{Reinforcement learning} 
Reinforcement learning (RL) traditionally aims at learning policies that allow autonomous agents to act in interactive environments. 
Reinforcement learning has a long tradition e.g.~in robotics~\cite{asada96ml,michels05icml,levine16jmlr}. 
In computer vision, it has mainly been used for active vision tasks~\cite{caicedo15iccv,bellver16nipsws,jayaraman17arxiv}, such as learning a policy for the spatial exploration of large images or panoramic images. 
Our use of RL differs from this, as we learn a policy for image annotation, not for image analysis. 
The learned policy enables the system to dynamically choose the annotation mechanism by which to interact with the user. 
\section{Problem definition and motivation}
\label{sec:methods}

\subsection{Why use intelligent annotation dialogs?}
\label{sec:intuitionandmotivation}

In this paper we tackle the problem of producing bounding box annotations for a set of images with image-level labels indicating which object classes they contain.
Consider annotating a {\em cat} in Fig.~\ref{fig:intuition} (left).
The figure shows two bounding boxes found by the detector.
We notice that:
\begin{enumerate*}[label={\alph*)}]
\item the image is relatively simple with only one distinct object;
\item there are only few high-scored {\it cat} detections;
\item they are big;
\item we might know a-priori that the detector is strong for the class {\it cat} and thus detections for it are often correct.
\end{enumerate*}
As box verification is much faster than drawing, the most efficient way to annotate a box in this situation is with a box verification series.

Now consider instead annotating a {\it potted plant} in Fig.~\ref{fig:intuition} (right). 
We notice that:
\begin{enumerate*}[label={\alph*)}]
\item the image is cluttered with many details;
\item there are many low-scored {\it potted plant} detections;
\item they are small;
\item we might know a-priori that the detector is weak for this class and thus the detections for it are often wrong.
\end{enumerate*}
In this situation, it is unlikely that the correct bounding box comes early in the series.
Thus, manual box drawing is likely to be the fastest annotation strategy.

Even during annotation of one image-class pair, the best strategy may combine both annotation types:
Given only one high-scored box for {\it cat}, the best expected strategy is to verify
one box, and, if rejected, ask manual box drawing.

These examples illustrate that every image, class and detector output requires a separate treatment for designing the best annotation strategy. 
Thus, there is need for a method that can take advantage of this information to select the most time efficient  sequence of annotation  actions.
In this paper, we propose two methods to achieve this with the help of Intelligent Annotation Dialog (\IAD{}) agents. 
In our first approach (Sec.~\ref{sec:approach1}) we explicitly model the expected episode duration by taking into consideration the probability for each proposed box to be accepted.
Our second approach (Sec.~\ref{sec:approach2}) casts the problem in terms of reinforcement learning and leans a strategy from trial-and-error interactions without an intermediate modeling step.

\begin{figure}[t]
\includegraphics[width=\linewidth]{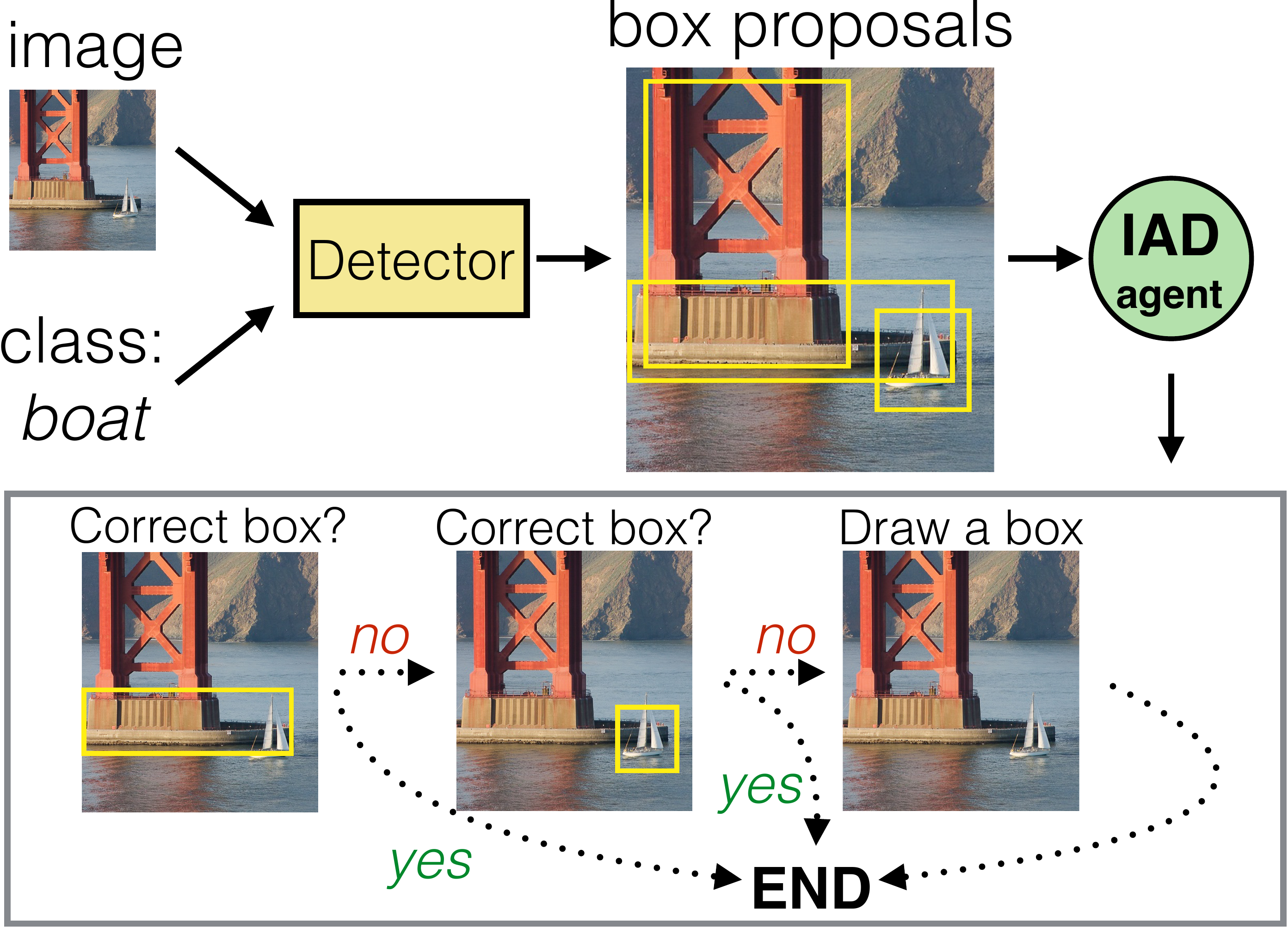}
\caption{Intelligent Annotation Dialog agent in action. For a given image and class {\em boat} the detector identifies a set of box proposals. IAD agent produces a planned dialog $V^2D$ that means that the first two box proposals are verified and if none of them is accepted, manual box drawing is done. In reality, the annotation terminates after two box verifications.}
\label{fig:workflow-boat}
\vspace{-4mm}
\end{figure}

\subsection{Problem definition}
\label{sec:probdef}

We are given an image with image-level labels that indicate which object classes it contains.  
We treat each class independently, and we want to produce one bounding box given a single image-class pair.
In particular, given that the image contains a set $\mathcal{B}^{\star}$ of object instances of the target
class, we want to produce a bounding box $\hat{b}$ of sufficient quality around one such object
$b^\star$. 
We measure the quality in terms of Intersection-over-Union (IoU) and we want to find $\hat{b}$
such that there exists $b^\star \in \mathcal{B}^\star: \text{IoU}(\hat{b},b^\star)\ge \alpha$.
More specifically, we want to automatically construct a sequence of actions which produces $\hat{b}$
while minimizing annotation time, choosing from two annotation
actions: manual bounding box drawing $D$~\cite{papadopoulos17iccv,su12aaai} that takes $t_D$ seconds
and bounding box verification $V$~\cite{papadopoulos16cvpr} that takes $t_V$ seconds.

We design the annotation dialog to end with a successfully annotated bounding box.  Logically, the only
possible planned sequence of actions which does this has the form $V^mD$. 
No sequence of verification $V$ is guaranteed to produce a bounding box, so if $m$ verifications fail to produce one, manual drawing
is required. Conversely, manual drawing always produces a box and the dialog ends.
Fig~\ref{fig:workflow-boat} illustrates how \IAD{} agent produces a planned sequence of $V^2D$ for the task of detecting a {\it boat} in the image with several detections.
In reality, only a sequence of actions $V^2$ is executed because a boat is found at the second verification.

Verification questions are generated using an object detector. Papadopoulos \etal~\cite{papadopoulos16cvpr}
present the highest scored detection to the annotator. Upon rejection, they remove boxes which
highly overlap with the rejected one (this procedure is called \emph{search space reduction}), after which they
present the next box with the highest score. In this paper we assume the detector stays constant during a
single annotation dialog, which means we can do search space reduction by non-maximum
suppression (NMS).
Let us denote by $B_0$ the sequence of detections followed by NMS. 
Because of NMS, we can assume boxes in $B_0$ to be independent for verification. 
Let $\mathcal{S}$ be the set of all possible sequences of distinct elements in $B_0$. Now our goal is to plan a
sequence of actions $\pi = V^mD$ on a sequence $S_m = (s_1, \ldots, s_m) \in \mathcal{S}$.

We can now formally define the optimization criterion for the \IAD{} agent.
Let $t(V^mD,S_m)$ be the duration of the episode when strategy $V^mD$ is applied to a sequence $S_m$ and let us denote its expected duration as $T(V^mD,S_m)$.
The task of \IAD{} is to choose
1) the maximum number of verifications $m=k$ that will define a sequence of actions $V^{k}D$, and
2) a sequence of boxes $A_k=(a_1,\ldots,a_k) \in S$ such that the duration of the episode is minimized in expectation:
\begin{align}
  \begin{split}
    T(V^kD,A_k) \le T(V^mD,S_m), \\
m \in \{0,\ldots,n\}, \forall S_m \in \mathcal{S}.
  \end{split}  
  \label{eq:startingcondition}
\end{align}
\section{Methods}

We now present our two methods to construct Interactive Annotation Dialogs (\IAD{}).

\subsection{\IAD{} by predicting probability of acceptance}
\label{sec:approach1}

One way to minimize the expected duration of the episode is by estimating the probability that the proposed boxes will be accepted by the annotator.
We can train a classifier $g$ that will predict if the box $b_i \in B_0$ is going to be accepted or not as a function of various parameters of the state of the episode.
By looking at the probability of acceptance $p(b_i)$ for every box, we can compute the expected duration of the episode $T(V^mD,S_m)$ for any $V^mD$ and $S_m$.
Given this acceptance probability estimation, we show that there exists a simple decision rule that chooses $m$ and $S_m$ so as to minimize the expected episode duration.

\mypar{Optimal strategy}

Suppose for now that we know the probabilities $p(b_i)$ for every box $b_i$ to be accepted at a quality level $\alpha$:
\begin{align}
p(b_i)=\mathbb{P}[\max_{b^\star \in B^{\star}}\{\text{IoU}(b_i,b^{\star})\} \ge \alpha].
\end{align}
Later in this section we will explain how to estimate $p(b_i)$ in practice.

Imagine for a moment that we have only one box proposal $b_1$.
In this case the only two possible sequences of actions are $D$ and $V^1D$. 
Let us compute the expected time until the end of the episode for both of them.
The episode duration for strategy $D$ is just the time required for manual drawing: $T(D) = t_D.$

For the second strategy $V^1D$, the end of the episode is reached with probability $p(b_1)$ when a box proposal is accepted and with probability $q(b_1)= 1-p(b_1)$ when manual drawing is done.
Hence, the expected duration of the episode is 
\begin{align}
T(V^1D,(b_1)) = t_V+q(b_1)t_D.
\end{align}
As we want to choose the strategy with the lowest expected duration of the episode, $D$ is preferred to $V^1D$ if~~$T(D) \le T(V^1D,(b_1))$, i.e.
\begin{align}
t_D \le t_V+q(b_1)t_D \iff p(b_1) \le t_V/t_D.
\label{eq:rule}
\end{align}
Now let us go back to a situation with a sequence of box proposals $B_0$.
We sort $B_0$ in the order of decreasing probability of acceptance $p(b_i)$, resulting in a sequence of boxes $\bar{S}_n$.
Consider the following strategy (Alg.~\approachClassification{}): Verify boxes from $\bar{S}_n$ for which $p(b_i)> t_V/t_D$; if none of them is accepted, then do manual box drawing. 
We claim that the strategy produced by \approachClassification{} is optimal, i.e. it minimizes the expected duration of the episode.

\begin{algorithm}[h]
   \caption*{{\bf Algorithm} \approachClassification}
   \label{alg:decision-making}
\begin{algorithmic}[1]
	\STATE {\bfseries Input:} $B_0 = (b_1,\ldots,b_n); p(b_1),\ldots,p(b_n); t_V; t_D$
	\STATE $\bar{S}_n = (\bar{s}_1, \bar{s}_2,\ldots,\bar{s}_n) \gets \text{sort}(B_0)$ by $p(b_i)$
	\STATE $\pi = ()$
	\STATE $A_k = ()$
	\WHILE {$p(\bar{s}_i) > t_V/t_D$}
		\STATE $A_k \gets A_k \frown \bar{s}_i$
	\ENDWHILE
	\STATE {$\pi \gets V^kD$}
	\RETURN {sequence of actions $\pi$, sequence of boxes $A_k$}
\end{algorithmic}
\end{algorithm}

\begin{theorem}
If probabilities of acceptance $\{p(b_i)\}$ are known, the strategy of applying a sequence of actions $V^kD$ defined by \approachClassification{} to a sequence of boxes $A_k$
minimizes the annotation time, i.e.\ for all $m\in \{0,\ldots,n\}$ and for all box sequences $S_m$:
\begin{align}
	T(V^kD,A_k) \le T(V^mD,S_m)
\end{align}
\end{theorem}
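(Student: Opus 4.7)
The plan is to prove optimality in three steps: (i) obtain an explicit formula for the expected episode time $T(V^mD,S_m)$, (ii) fix the ordering of any chosen set of boxes by an exchange argument, and (iii) determine which boxes to include by a marginal-cost calculation.

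First, I would write out $T$ explicitly. By the independence of the boxes in $B_0$ (guaranteed by NMS, as noted in Sec.~\ref{sec:probdef}), verification $i$ is executed iff verifications $1,\ldots,i-1$ are all rejected, and manual drawing occurs iff all $m$ verifications are rejected. This yields
\begin{align}
T(V^mD,S_m) \;=\; t_V \sum_{i=1}^{m} \prod_{j=1}^{i-1} q(s_j) \;+\; t_D \prod_{j=1}^{m} q(s_j),
\end{align}
with the empty product equal to $1$. Every subsequent step is a manipulation of this formula.

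Next, fix the set of boxes used. If adjacent entries $s_i,s_{i+1}$ satisfy $p(s_i)<p(s_{i+1})$, swapping them leaves every $\prod_{j=1}^{k}q(s_j)$ with $k\neq i$ unchanged (the product up to $k=i+1$ is symmetric in the swapped pair), so the only affected term of the verification sum is the one with upper limit $i$, and the resulting change in $T$ equals $t_V\bigl(\prod_{j=1}^{i-1}q(s_j)\bigr)\bigl(q(s_{i+1})-q(s_i)\bigr)<0$. By a bubble-style iteration, any optimal sequence must therefore be sorted in decreasing $p$. In parallel, appending a single new box $b$ to a sequence of length $m$ changes $T$ by $\bigl(\prod_{j=1}^{m}q(s_j)\bigr)\bigl(t_V-p(b)\,t_D\bigr)$, which is strictly negative iff $p(b)>t_V/t_D$. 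Since this sign depends only on $p(b)$ and never on the current sequence, the set of boxes used in an optimum must be exactly those with $p>t_V/t_D$; otherwise one could strictly improve either by appending a missing such box at the end or by deleting a present one below threshold. These two facts together identify the optimal sequence and length as $A_k$ and $k$, so $T(V^kD,A_k)\le T(V^mD,S_m)$ for every $(m,S_m)$.

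The main obstacle is checking that the three local-improvement arguments (order, inclusion, length) compose into a single global minimum rather than merely local optima. The closed-form expression in the first step makes this clean: along any maximal chain of strict improvements via sort-exchanges and append/delete moves one necessarily reaches the unique sorted-prefix sequence produced by Algorithm \approachClassification{}, which is therefore globally optimal.
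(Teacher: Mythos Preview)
Your proof is correct and follows essentially the same route as the paper: derive the closed form for $T(V^mD,S_m)$, use an adjacent-swap exchange argument to fix the order, and use the marginal change $\bigl(\prod_j q(s_j)\bigr)(t_V-p(b)\,t_D)$ to determine which boxes belong in the sequence. The only cosmetic difference is that the paper first invokes monotonicity of the closed form in the $q$'s to argue that the best length-$m$ sequence uses the top-$m$ boxes and then compares consecutive sorted prefixes $\bar S_{m-1},\bar S_m$, whereas you fold both into a single append/delete-at-the-end argument; the underlying computation is identical.
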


\begin{proof}[Sketch of the proof] 

The proof consists of two parts. First, we show that for any strategy $V^mD$, the best 
box sequence is obtained by sorting the available boxes by their probability of 
acceptance and using the first $m$ of them. Second, we show that the number
of verification steps found by \approachClassification{}, $k$, is indeed the optimal one.

We start by rewriting the expected episode length in closed form. 
For a strategy $V^m D$ and any sequence of boxes, 
$S_m=(s_1,\dots,s_m)$, we obtain
\begin{small}
\begin{align}
&T(V^mD, S_m ) = t_{V}+q(s_1)t_V+q(s_1)q(s_2)t_V + \dots  \notag\\
&\quad + q(s_1)q(s_2)\cdots q(s_{m-1})t_V + q(s_1)q(s_2)\cdots q(s_m)t_D \notag
\\
&\quad= t_V\sum_{l=0}^{m-1}\prod_{j=1}^l q(s_j)  + t_D\prod_{j=1}^m q(s_j).
\label{eq:expectation_unrolled}
\end{align}
\end{small}
Our first observation is that~\eqref{eq:expectation_unrolled} is monotonically decreasing as a function of $q(s_1),\dots,q(s_m)$. Consequently, the smallest value is obtained by selecting the set of $m$ boxes that have the smallest rejection probabilities.
To prove that their optimal order is sorted in decreasing order, assume that $S_m$ is not sorted, i.e.\ there exists an index $l\in\{1,\dots,m-1\}$ for which $q(s_l)>q(s_{l+1})$. 
We compare the expected episode length of $S_m$ to that of a sequence $\tilde S_m$ in which $s_{l}$ and $s_{l+1}$ are at switched positions. Using~\eqref{eq:expectation_unrolled} and noticing that many of the terms cancel out, we obtain
\begin{small}
\begin{align}
T&(V^mD, S_m )- T(V^mD, \tilde S_m ) \notag  \\
&= t_V\big(q(s_l)\!-\!q(s_{l+1})\big)\Big(\prod_{j=1}^{l-1} q_j\Big) \quad>0.
\end{align}
\end{small}
This shows that $\tilde S_m$ has strictly smaller expected episode length than $S_m$, so $S_m$ cannot have been the optimal order. 

Consequently, for any strategy $V^mD$, the optimal sequence is to sort the boxes by decreasing probability of rejection, \ie increasing acceptance probability.
We denote it by $\bar S_m=(\bar s_1,\dots,\bar s_m)$. 

Next, we show that the number, $k$, of verification actions found by the \approachClassification{} algorithm is optimal, \ie $V^k D$ is better or equal to $V^mD$ for any $m \ne k$.
As we already know that the optimal box sequence for any strategy $V^m D$ is $\bar S_m$, it is enough to show that
\begin{small}
  \begin{align}
    T(V^{m-1}D, \bar S_{m-1}) 
    &\geq T(V^mD, \bar S_{m}),
    \label{eq:E_decreasing}
 \end{align}
\end{small}
for all $m\in\{1,\dots,k\}$, and
\begin{small}
  \begin{align}
    T(V^{m-1} D, \bar S_{m-1}) 
    &\leq 
    T(V^{m}D,\bar S_m).
    \label{eq:E_increasing}
    \end{align}
\end{small}
for all $m\in\{k+1,\dots,n\}$. 
To prove these inequalities, we again make use of expression~\eqref{eq:expectation_unrolled}. For any  $m\in\{1,\dots,n-1\}$ we obtain
\begin{small}
\begin{align}
&T(V^{m} D, \bar S_{m}) - T(V^{m-1}D,\bar S_{m-1})
\notag \\
&= t_V\prod_{j=1}^{m-1} q(\bar s_j) + t_D\prod_{j=1}^{m} q(\bar s_j) - t_D\prod_{j=1}^{m-1} q(\bar s_j)
\notag \\
&= \Big(\prod_{j=1}^{m-1} q(\bar s_j)\Big)\big(t_V + q(\bar s_m)t_D - t_D\big).
\label{eq:Vm_Vm1}
\end{align}
\end{small}
For $m\in\{1,\dots,k\}$, we know that $p(\bar{s}_m) > t_V/t_D$ by construction of the strategy.
As in~\eqref{eq:rule}, this is equivalent to $t_V + q(\bar s_m)t_D - t_D \geq 0$.
Consequently, \eqref{eq:Vm_Vm1} is non-negative in this case, and inequality \eqref{eq:E_decreasing} is confirmed.
For $m\in\{k+1,\dots,n\}$, we know $p(\bar{s}_m) \leq t_V/t_D$, again by construction. 
Consequently, $t_V + q(\bar s_m)t_D - t_D \leq 0$, which shows that \eqref{eq:Vm_Vm1} is nonpositive in this case, confirming \eqref{eq:E_increasing}.
\end{proof}

\mypar{Predicting acceptance probability}

To follow the optimal strategy \approachClassification{}, we need the probabilities of acceptance
$\{p(b_i)\}$ which we estimate using a classifier $g$. To obtain these probabilities we start with a (small) set $Z_0$
of annotated bounding boxes on a set of images $I_0$. We apply a detector $f_0$ on $I_0$ to obtain a
set of detections $B_0$. Afterwards, we generate a feature vector $\phi_i$ for every box $b_i \in
B_0$. The exact features are specified in Sec.~\ref{sec:experimentalsetup} and include measurements
such as detector scores, entropy, and box-size.

Next, we simulate verification responses for box proposals $B_0$ of every image-class pair with known ground truth. 
A box $b_i$ gets label $y_i = 1$ if its IoU with any of the ground truth boxes is great or equal to $\alpha$, otherwise it gets label $0$.
This procedure results in feature-label pairs $(\phi_i, y_i)$ that serve as a dataset for training a probabilistic classifier $g$. 

Intuitively, the classifier learns that, for example, boxes with high detector's score are more
likely to be accepted than boxes with low detector's score, bounding boxes for class {\it cat} are
more likely to be accepted than bounding boxes for class {\it potted plant}, and smaller bounding boxes are less likely to be accepted than big ones.

\subsection{\IAD{} by reinforcement learning}
\label{sec:approach2}

The problem of finding a sequence of actions to produce a box annotation can be naturally formulated as a reinforcement learning problem. 
This approach allows us to learn a strategy directly from trial-and-error experience and to avoid the explicit modeling of Sec.~\ref{sec:approach1}.
To construct an optimal strategy it does not need any prior knowledge about the environment.
Thus, it is easily extensible to other types of actions or to stochastic environments with variable response time by an annotator.

Suppose that bounding boxes in an episode are verified in order of decreasing detector's score given by $B_0$.
In an {\it episode} of annotating one image for a given target class, the \IAD{} agent interacts with the {\it environment} in the form of the annotator.
A {\it state} $s_\tau$ is characterised by the properties of a current image, detector and a current box proposal (as $\phi_i$ in Sec.~\ref{sec:approach1}).
In each state the agent has a choice of two possible {\it actions} $a$:
\begin{enumerate*}[label={\arabic*)}]
\item ask for verification of the current box ($a=V$) and 
\item ask for a manual drawing ($a=D$)
\end{enumerate*}
The {\it reward} at every step $\tau$ is the negative time required for the chosen action: $r_\tau = -t_V$ and $r_\tau = -t_D$.
If a box is positively verified or manually drawn, the episode terminates with a reward \num{0}. 
Otherwise the agent finds itself in the next state corresponding to the next highest-scored box proposal in $B_0$.
The total {\it return} of the episode is the sum of rewards over all steps.
Denoting the number of steps after which an episodes terminates by $K$, the return is $R = \sum_{\tau=1}^K r_\tau$.
This is equal to $-(K-1)t_V-t_D$ if the episode finished with manual drawing, or $-Kt_V$ if it finished with box acceptance. 
By trying to maximise the return $R$, the agent learns a policy $\pi$ that minimises the total annotation time.
This results in a strategy that consists of a sequence of actions $\pi$ applied to a sequence of boxes $B_0$.

\mypar{Training the agent}

The agent can learn the optimal policy $\pi$ from trial and error interactions with the environment. 
As in Sec.~\ref{sec:approach1}, we train on a small subset of annotated bounding boxes $Z_0$.
We learn a policy with {\it Q-learning} which learns to approximate {\it Q-function}
$Q^\pi(a,s_\tau)$ that indicates what return the agent should expect at state $s_\tau$ after taking an action $a$ and after that following a strategy $\pi$.

\section{Experiments}
\label{sec:experiments}

\subsection{Experimental setup}
\label{sec:experimentalsetup}

We evaluate the performance of the \IAD{} approach on the task of annotating bounding boxes on the PASCAL VOC 2007 trainval dataset.
In all experiments our detector is Faster-RCNN~\cite{ren15nips} using Inception-ResNet~\cite{szegedy16cvpr} as base network.

\mypar{Annotator actions and timings}
We simulate the annotator based on the ground truth bounding boxes.
When asked for verification, a simulated human annotator deterministically accepts a box proposal if IoU $\geq \alpha$ and it takes $t_V = 1.8$ seconds~\cite{papadopoulos16cvpr}. 
When the simulated annotator is asked to draw a box, we use the ground truth box.
We consider two interfaces for drawing: the classical manual drawing $M$~\cite{su12aaai} and the new faster Extreme Clicking $X$~\cite{papadopoulos17iccv}.
We consider that it takes a simulated user $t_M = 25.5$ or $t_X = 7$ seconds to return a bounding box that corresponds to any of the objects $b^\star$~\cite{su12aaai, papadopoulos17iccv}.

\mypar{Box proposal order}

The order of box proposals for verifications is set to be $B_0$, i.e. in decreasing order of detector's score (Sec.~\ref{sec:probdef}).
Then, the optimality condition of strategy \approachClassification{} assumes that a box with higher score is more likely to be accepted than a box with lower score.
Empirically, we observe only rare cases when this assumption is violated, but even then, changing the order does not improve results. 
Thus, we keep the original order $B_0$ for computational efficiency and consistency with \approachRL{}.
The images come in the same fixed random order for all methods.

\mypar{Box features}

When predicting the acceptance probability (Sec.~\ref{sec:approach1}) and during reinforcement learning (Sec.~\ref{sec:approach2}), we use the following features $\phi_i$ characterizing box $b_i$, image, detector, and target class: 
\begin{enumerate*}[label={\alph*)}]
\item prediction score of the detector on the box: $d(b_i)$;
\item relative size of the box $b_i$ in the image;
\item average prediction score of all box proposals for the target class;
\item difference between c) and $d(b_i)$;
\item difference between the maximum score for the target class among all box proposals and $d(b_i)$;
\item one-hot encoding of class.
\end{enumerate*}

\mypar{IAD-Prob}
To predict box acceptance probabilities,
we use a neural network classifier with \num{2} to \num{5} layers containing \num{5} to \num{50} neurons in each layer for predicting the acceptance of a box and these parameters are chosen in cross-validation (Sec.~\ref{sec:approach1}).
We experimented with other types of classifiers including logistic regression and random forest and did not find any significant difference in their performance.

\mypar{IAD-RL}
We learn a policy for the reinforcement learning agent with a method similar to
\citep{mnih13nips} (Sec.~\ref{sec:approach2}). 
The function approximation of Q-values is a fully-connected neural network with \num{2} layers and \num{30} neurons at every layer.
We learn it from interactions with simulated environment using experience replay.
We use exploration rate $\epsilon=0.2$, mini-batches of size \num{64} and between \num{500} and \num{1000} training iterations.
A subset of training samples is reserved for validation: we use it for choosing parameter of neural network and for early stopping.

\subsection{\IAD{} with a fixed detector}
\label{sec:experiment1}

\mypartop{Scenarios}
We evaluate our methods in several scenarios, by varying the following properties of the problem:
\begin{enumerate*}[label={\alph*)}] 
\item the desired quality of boxes,
\item the strength of the detector, and
\item which interface is used to draw a box.
\end{enumerate*}
Intuitively, different properties tend to prioritize different actions $V$ or $D$.
The higher the desired quality is, the more frequently manual box drawing is needed.
When the detector is strong, box verification is successful more often and is preferred to drawing due to its small cost.
Finally, using the fast Extreme Clicking interface, manual drawing is cheaper and becomes more attractive.
Specifically, we consider the following three configurations, each for both quality levels:
\begin{enumerate}
  \descitem{Weak detector, slow drawing, varying quality}
  Classical, slow interface to draw boxes~\cite{su12aaai} with a weak detector. 
  To train the detector (Sec.~\ref{sec:experimentalsetup}), we first produce bounding box estimates using standard Multiple Instance Learning (MIL, e.g.~\cite{bilen14bmvc,cinbis14cvpr,siva11iccv}). 
  The first two columns of Tab.~\ref{tab:exp1} report the average time per one annotation episode.
\vspace{-0.1cm}
  \descitem{Weak detector, fast drawing, varying quality}
  The fast Extreme Clicking~\cite{papadopoulos17iccv} for drawing boxes. 
  We report the results in columns \num{3} and \num{4} of Tab.~\ref{tab:exp1}.
\vspace{-0.1cm}
  \descitem{Strong detector, fast drawing, varying quality}
  In many situations we have access to a reasonably strong detector before starting annotation of a new dataset. 
  To model this we train $f_0$ on the PASCAL 2012 dataset train set which contains 16k boxes.
  The results are presented in the last two columns of Tab.~\ref{tab:exp1}.
\end{enumerate}
\vspace{-0.3cm}

\mypar{Dataset}

We use PASCAL 2007 trainval~\cite{everingham10ijcv}, where we assume that image-level annotations are available for all images, whereas bounding boxes are given only in a small subset of images $Z_0$.
The task is to annotate the rest of the images $Z'$ with bounding boxes.
$Z$ and $Z'$ are set with \num{10}-fold validation and the reported results are averages over them.

\mypar{Standard strategies}
As baselines, we consider two standard annotation strategies. The first is to always do manual drawing ($D$). 
The second is to run box verification series, followed by drawing if all available boxes have been rejected ($V^*D$).
This strategy is guaranteed to terminate successfully while being the closest to~\cite{papadopoulos16cvpr}.

\begin{table*}[t]  
  \vspace{-.4cm}
  \centering
  \begin{tabular}{l|rr|rr|rr}
    \toprule
    {\bf Drawing technique}	& \multicolumn{2}{c}{Slow drawing} & \multicolumn{4}{c}{Fast drawing} \\
    {\bf Detector}	& \multicolumn{2}{c}{Weak detector} & \multicolumn{2}{c}{Weak detector} & \multicolumn{2}{c}{Strong detector} \\
    {\bf Quality level}	& $\alpha=0.7$ & $\alpha=0.5$ & $\alpha=0.7$ & $\alpha=0.5$ & $\alpha=0.7$ & $\alpha=0.5$ \\
    \midrule
    {$D$ (standard)} & \num{25.50} $\pm$ \num{0.00} & \num{25.50} $\pm$ \num{0.00} & \textbf{\num{7.00}} $\pm$ \num{0.00} & \num{7.00} $\pm$ \num{0.00} & \num{7.00} $\pm$ \num{0.00} & \num{7.00} $\pm$ \num{0.00} \\
    {$V^1D$} & \cellcolor{yellow!40}\textbf{\num{23.01} $\pm$ \num{0.07}} & \num{17.30} $\pm$ \num{0.07} & \num{7.62} $\pm$ \num{0.02} & \textbf{\num{6.05}} $\pm$ \num{0.02} & \cellcolor{yellow!40}\textbf{\num{3.45} $\pm$ \num{0.01}} & \num{2.50} $\pm$ \num{0.01} \\
    {$V^2D$} & \num{23.79} $\pm$ \num{0.06} & \num{16.67} $\pm$ \num{0.06} & \num{8.92} $\pm$ \num{0.02} & \num{6.67} $\pm$ \num{0.02} & \cellcolor{yellow!40}\num{3.48} $\pm$ \num{0.01} & \cellcolor{yellow!50}\textbf{\num{2.45}} $\pm$ \num{0.01} \\
    {$V^3D$} & \num{24.67} $\pm$ \num{0.07} & \textbf{\num{16.38}} $\pm$ \num{0.07} & \num{10.21} $\pm$ \num{0.02} & \num{7.32} $\pm$ \num{0.03} & \num{3.65} $\pm$ \num{0.02} & \num{2.48} $\pm$ \num{0.01}\\
    {$V^\star D$ (standard)} & \num{42.29} $\pm$ \num{0.07} & \num{17.37} $\pm$ \num{0.07} & \num{31.82} $\pm$ \num{0.11} & \num{11.46} $\pm$ \num{0.04} & \num{8.83} $\pm$ \num{0.09} & \num{3.18} $\pm$ \num{0.02} \\
    \midrule
    {\approachClassification{}} & \cellcolor{yellow!40}\num{23.07} $\pm$ \num{0.23} & \cellcolor{yellow!40}\num{12.64} $\pm$ \num{1.29} & \cellcolor{yellow!40}\num{6.81} $\pm$ \num{0.02} & \cellcolor{yellow!40}\num{5.86} $\pm$ \num{0.04} & \cellcolor{yellow!40}\num{3.42} $\pm$ \num{0.18} & \num{2.73} $\pm$ \num{0.08} \\
    {\approachRL{}} & \num{23.62} $\pm$ \num{0.38} & \num{16.30} $\pm$ \num{0.09} & \cellcolor{yellow!40}\num{6.83} $\pm$ \num{0.03} & \cellcolor{yellow!40}\num{5.89} $\pm$ \num{0.05} & \cellcolor{yellow!40}\num{3.60} $\pm$ \num{0.07} & \num{2.66} $\pm$ \num{0.06} \\
	\midrule
	lower bound & \num{18.55} $\pm$ \num{0.05} & \num{10.23} $\pm$ \num{0.04} & \num{5.99} $\pm$ \num{0.01} & \num{4.66} $\pm$ \num{0.01} & \num{2.80} $\pm$ \num{0.01} & \num{2.19} $\pm$ \num{0.01} \\
    \bottomrule
  \end{tabular}
  
  \caption{Average episode duration for {\em standard}, {\em fixed} and \IAD{} strategies in scenarios varying in drawing speed, strength of detector and quality level. \textbf{Best fixed strategy} results are highlighted in bold. The \fcolorbox{yellow}{yellow!40}{best result} of each scenario is indicated in yellow (multiple highlights if very close). The two IAD agents do approximately equally well.}
  \vspace{-.4cm}
\label{tab:exp1}
\end{table*}

\mypar{Fixed strategies}
We introduce a family of fixed strategies that combine the two actions $V$ and $D$ in a predefined manner, without adapting to a particular image, class and detector: 
$V^1D$,
$V^2D$, and
$V^3D$.

\mypar{Lower bound}
We also report the lower bound on the duration of the annotation episode.
If we knew which box (Sec~\ref{sec:probdef}) in the proposal sequence $B_0$ is the first that will be accepted, we could choose a sequence of actions that leads to the lowest annotation cost.
If accepted box is at the position $k^\star$ in sequence $B_0$, then the strategy is the following.
If the cost of $k^\star$ verifications is lower than the cost of a drawing, then the verification series is done, otherwise, drawing is done.
Note how this lower bound requires knowing the ground-truth bounding box. 
So, it is only intended to reveal the limits of what can be achieved by the type of strategies that we explore.

\mypar{Results}

Tab.~\ref{tab:exp1} shows that the scenario settings indeed influence the choice between $V$ and $D$, along three dimensions:
\begin{enumerate*}[label={\alph*)}]
\item When annotations of higher quality are required, the best fixed strategy does fewer verifications, i.e. it resorts to manual drawing ealier in the series than when lower quality is acceptable (columns \num{1} vs. \num{2}, \num{3} vs. \num{4}, \num{5} vs. \num{6}). 
\item When the detector is strong (columns \num{5} and \num{6}), the best fixed strategy does more box verifications than with a weak detector (columns \num{3} and \num{4}).
\item When manual drawing is fast (columns \num{3} and \num{4}), the best fixed strategy tends to do fewer box verifications than when drawing is slow (columns \num{1} and \num{2}).
\end{enumerate*}
The gap to the lower bound indicates how hard each of the scenarios is.

Importantly, both of our \IAD{} strategies outperform any standard strategy in all scenarios.
Moreover, IAD-Prob is significantly better than the best fixed strategy in three scenarios, equal in two, and worse in one.
No single fixed strategy works well in {\em all} scenarios, and finding the best fixed strategy requires manual experimentation.
In contrast, IAD offers a principled way to automatically construct an adaptive strategy that works well in {\em all} problem settings.
Indeed, the consistent competitive performance of \IAD{} demonstrates that it learns to adapt to the scenario at hand. 

\subsection{\IAD{} with an iteratively improving detector}
\label{sec:experiment2}

\begin{figure*}[th]
\includegraphics[width=0.3\linewidth]{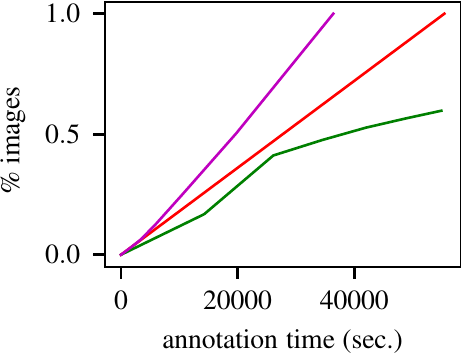}
\hspace{0.5cm}
\includegraphics[width=0.3\linewidth]{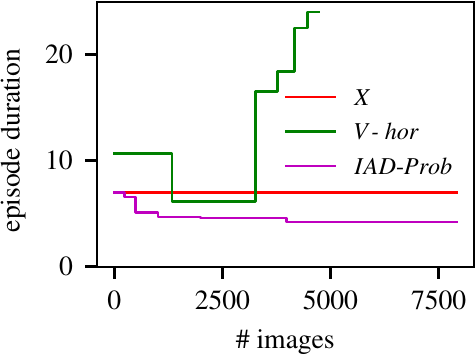}
\hspace{0.5cm}
\includegraphics[width=0.3\linewidth]{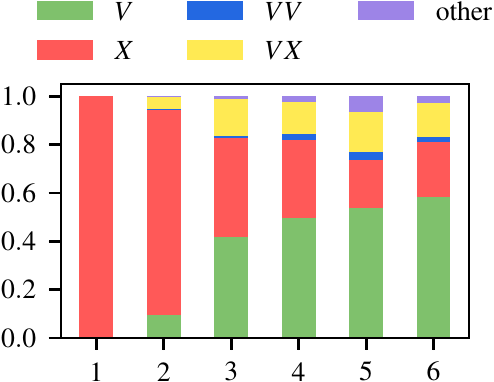}
  \vspace{-0.2cm}
  \caption{Left: the proportion of annotated images as a function of annotation time for \approachClassification{} and {\em standard} strategies. Middle: average episode duration for various batches of data. Right: the proportion of various annotation sequences in batches of data at \num{6} iteration.}
  \vspace{-0.2cm}
\label{fig:retraining}
\end{figure*}

In realistic settings, the detector becomes stronger with a growing amount of annotations.
Thus, to annotate bounding boxes with minimal cost, the object detector should be iteratively re-trained on previously annotated data.

\mypar{Horizontal re-training}

One way to introduce detector re-training is suggested by the box verification series technique~\cite{papadopoulos16cvpr}.
It starts with a given object detector $f_0$, typically trained on image-level labels using MIL.
In the first iteration, $f_0$ is applied to all images, and the highest scored detection $b_1$ in each image is sent for human verification.
After this, the detector is re-trained on all accepted boxes, giving a new detector $f_1$.
In the second iteration, $f_1$ is applied to all images where a proposed box was rejected, attempting to localize the objects again as $f_1$ is stronger than $f_0$ (re-localization phase). Afterwards, these new detections are sent for verification, and finally the detector is re-trained again.
The re-training, re-localization, and verification phases are iteratively alternated for a predefined number of iterations.
We refer to this method as {\em $V$-hor} in our experiments. It essentially corresponds to the original method~\cite{papadopoulos16cvpr}.

\mypar{Vertical re-training}

A different way to incorporate detector re-training is inspired by batch-mode active learning \cite{settles10survey}.
In this case, a subset of images $I_1$ (batch) is annotated until completion, by running box verification series in each image while keeping the initial detector $f_0$ fixed.
After this, the detector is re-trained on all boxes produced so far, giving $f_1$, and is then applied to the next batch $I_2$ to generate box proposals. The process iteratively moves from batch to batch until all images are processed.

\mypar{IAD with vertical re-training}

It is straightforward to apply vertical retraining to any fixed dialog strategy.
However, re-training the detector on more data increases the advantage of $V$ over $D$, so a truly adaptive strategy should change as the detector gets stronger.
We achieve this with the following procedure. 
At any given iteration $\tau$, we train dialog strategy \approachClassification{}$(I_\tau,f_{\tau-1})$ using boxes collected on $I_\tau$ and detector $f_{\tau-1}$.
\approachClassification{}$(I_\tau,f_{\tau-1})$ is applied with detector $f_\tau$ to collect new boxes on the next batch $I_{\tau+1}$. 
Note that \approachClassification{}$(I_\tau,f_{\tau-1})$ is trained with the help of detector $f_{\tau-1}$, but it is applied with the box proposals of detector $f_\tau$.
This procedure introduces a small discrepancy, but it is not important when detectors $f_\tau$ and $f_{\tau-1}$ are sufficiently similar, which is the case in the experiments below. 
To initialize the procedure we set $f_0$ to be a weakly supervised MIL detector and we annotate $I_1$ by manual box drawing $D$.

We set the desired quality of bounding boxes to high (i.e. $\alpha=0.7$) and we use Extreme Clicking for manual drawing.
We perform 6 re-training iterations with an increasingly large batch size:
$\left\vert{I_1}\right\vert = 3.125\%$, $\left\vert{I_2}\right\vert = 3.125\%$, $\left\vert{I_3}\right\vert = 6.25\%$, $\left\vert{I_4}\right\vert = 12.5\%$, $\left\vert{I_5}\right\vert = 25\%$, $\left\vert{I_6}\right\vert = 50\%$. 
This batching schedule is motivated by the fact that the gain in detector's performance after re-training is more noticeable when the previous training set is considerably smaller.

\mypar{Results}

Fig.~\ref{fig:retraining} (left) shows what proportion of boxes is collected as a function of total annotation time.
We compare \approachClassification{} against the strategy {\em $V$-hor} \cite{papadopoulos16cvpr}, and the standard fast drawing strategy $X$.
\approachClassification{} is able to annotate the whole dataset faster than any of the considered strategies.
Fig.~\ref{fig:retraining} (middle) shows the average episode duration in each batch.
By design, the annotation time for strategy $X$ is constant.
For {\em $V$-hor}, after the first re-training iteration (from a weakly supervised to supervised detector) the average annotation cost grows because only difficult images are left to be annotated.
On the contrary, annotation time for \IAD{} decreases with every new batch because dialogs become stronger and box verifications become more successful.

\mypar{Quality of boxes and resulting detector}
The data for training a detector and strategy in IAD includes both manually drawn boxes and boxes verified at IoU $ \ge 0.7$. 
More precisely, IAD data collection results in \num{44}\% drawn boxes and
\num{56}\% verified boxes. The quality of the verified boxes reaches \num{83}\% mIoU.
The detector trained on the boxes produced by IAD reaches \num{98}\% of the mAP of the detector trained on ground-truth boxes.

\mypar{Evolution of adaptive strategies}
\label{sec:analyses}

To gain better understanding of adaptive behaviour of \IAD{}, we study the composition of sequences of actions produced during labelling of each batch.
Fig.~\ref{fig:retraining} (right) shows the proportion of images that are labelled by $X$, $V$, $VX$, $VV$ and others sequences of actions.
At the beginning of the process (batch \num{2}), the vast majority of boxes is produced simply by asking for Extreme Clicking ($X$).
It means that \IAD{} learns that this is the best thing to do when the detector is weak. 
As the process continues, the detector gets stronger and \IAD{} selects more frequently series composed purely of box verifications ($V$,$VV$), and mixed series with both actions ($VX$).
Examples of the annotation dialogs are presented in the supplementary materials.
This experiment demonstrates that \IAD{} is capable of producing strategies that dynamically adapt to the change in problem property caused by the gradually improving detector. One cannot achieve this with any fixed strategy.

\section{Conclusion}

In this paper we introduced Intelligent Annotation Dialogs for the task of bounding box annotation. 
IAD automatically chooses a sequence of actions $V^kD$ that results in time-efficient annotations.
We presented two methods to achieve this.
The first method models the annotation time by predicting the acceptance probability for every box proposal.
The second method skips the modelling step and learns an efficient strategy directly from trial-and-error interactions.
In the extensive experimental evaluation IAD demonstrated competitive performance against various baselines and the ability to adapt to multiple problem properties. 
In future work we would like to model variable annotation time and context switches.

\pagebreak

{\small
\bibliographystyle{ieee}
\bibliography{shortstrings,calvin,new_bib_entries}
}

\end{document}